\newcommand{\tab}{\hspace{0.5em}}
\title{Graph Matching via Optimal Transport}
\author[1]{Ali Saad-Eldin\thanks{ali.saadeldin11@gmail.com}} \author[1]{Benjamin D. Pedigo\thanks{bpedigo@jhu.edu}} \author[2]{Carey E. Priebe} \author[1,3]{Joshua T. Vogelstein\thanks{jovo@jhu.edu}}
\affil[1]{Department of Biomedical Engineering, Johns Hopkins University}
\affil[2]{Department of Applied Mathematics and Statistics, Johns Hopkins University}
\affil[3]{Institute for Computational Medicine, Kavli~Neuroscience~Discovery Institute, Johns Hopkins University}
\begin{document}
\maketitle

\begin{abstract}
The graph matching problem seeks to find an alignment between the nodes of two graphs that minimizes the number of adjacency disagreements. Solving the graph matching is increasingly important due to it's applications in operations research, computer vision, neuroscience, and more. However, current state-of-the-art algorithms are inefficient in matching very large graphs, though they produce good accuracy.  The main computational bottleneck of these algorithms is the linear assignment problem, which must be solved at each iteration. In this paper, we leverage the recent advances in the field of optimal transport to replace the accepted use of linear assignment algorithms. We present \textit{GOAT}, a modification to the state-of-the-art graph matching approximation algorithm "FAQ" (Vogelstein, 2015), replacing its linear sum assignment step with the "Lightspeed Optimal Transport" method of Cuturi (2013). The modification provides improvements to both speed and empirical matching accuracy. The effectiveness of the approach is demonstrated in matching graphs in simulated and real data examples. 
\end{abstract}

\section{Introduction}

Graphs are widely used in many fields within the scientific community where data is relational, including social networks, computer vision, and neuroscience \cite{foggia10, conn_code, stat_conn}. In many of these settings, we often work simultaneously with multiple graphs, and want to quantify how they relate to each other. Specifically, we might seek to find a correspondence between the nodes of two graphs such that the connectivity across networks is preserved as best as possible. The Graph Matching Problem consists of finding the bijection between two vertex sets that minimizes the number of adjacency disagreements. If the two graphs are isomorphic, the graph matching problem should find the exact isomorphism between the two graphs. Applications of Graph Matching include biometrics such as facial and fingerprint recognition, video analysis with object and motion tracking, and symbol and string recognition in documentation \cite{foggia30}. 

The graph matching problem is extremely difficult to solve, and no polynomial-time algorithms exist to solve it in its general form. Indeed, in its most general form, the graph matching problem is equivalent to the famous combinatorial optimization problem, the Quadratic Assignment Problem, which is known to be \textbf{NP-hard} \cite{Burkard1998}. For this reason, finding accurate, efficient approximation algorithms for the graph matching problem is an active field of research. There are three main categories of graph matching approximation algorithms: tree search \cite{tree1, tree2}, spectral embedding \cite{umey, rank}, and continuous optimization \cite{faq, sgm, path} methods. In this paper, we focus on graph matching via continuous optimization methods.

The methods presented in this paper focus on the solving the more difficult inexact graph matching problem, where an exact isomorphism may not exist. In general, even though it is known to be in NP-complete, the exact graph matching can be solved relatively accurately via existing methods, as we will show later in this work. However, these methods quickly fail when matching inexact graph pairs. In the past, solutions to matching inexact graph pairs have focused on incorporating prior information, such as seed nodes (a subset of known node correspondences).  Since most real world graph matching applications involve matching non-isomorphic graphs, it is important to be able to be able to match them with no additional information besides the graphs themselves. Additionally, a compounding issue with existing cutting edge graph matching algorithms is that they are slow on large graphs. Owing to their $O(n^3)$ time complexity, they are inefficient on large graphs ($n \approx 10^5$) \cite{divide_and_conquer}.

 We present the algorithm for \textit{Graph matching via OptimAl Transport}, \textbf{GOAT}, a modification to the state-of-the art algorithm, \textit{FAQ}, developed by Vogelstein et. al, 2015 \cite{faq}. Specifically, the proposed algorithm alters the step direction calculation from being a strict permutation matrix to a general doubly stochastic matrix. Since the step direction is computed during the local optimum search in the relaxed sub problem, it is inefficient to restrict the feasible region \cite{opti}. This modification also replaces FAQ's main computational bottleneck of solving the linear assignment problem at each iteration Thus, the modification increases efficiency by decreasing running time on larger graphs. Additionally, we will show how GOAT provides performance advantages over FAQ, specifically adding robustness to stochasticity between graph pairs when graph matching. 

\section{Background}
\subsection{The Quadratic Assignment Problem}
Consider two real matrices $A, B \in \mathbb{R}^{n \times n}$. Let $\mathcal{P} = \{P \in \{0, 1\}^{n \times n} | P \textbf{1}_n = \textbf{1}_n, P^\intercal \textbf{1}_n = \textbf{1}_n \}$ be the set of $n \times n$ permutation matrices, where $\textbf{1}_n$ is the n-dimensional vector of ones. We formally define the Quadratic Assignment Problem (QAP) as the following problem, in matrix notation
\begin{equation}
\tag{QAP}
\begin{aligned}
\min & {\;\trace{A^\intercal PB P^\intercal}}\\
\text{s.t. } & {\;P \: \epsilon \: \mathcal{P}} \\
\end{aligned}
\end{equation}
The combination of quadratic objective function and non-convex feasible region makes this problem \textbf{NP-hard} to solve; no efficient, exact algorithm is known. One could naively solve the problem by finding the objective function value for all permutation matrices in $\mathcal{P}$, however this set is massive, even for small $n$. For reference, $\mathcal{P}$ is of size $n!$, with more than $10^{157}$ solutions when $n=100$. For many applications, $n$ is typically greater than 1,000, meaning that $|\mathcal{P}| > 10^{249}$. We thus seek a method for approximately solving QAP.

Rather than solve the problem over $\mathcal{P}$, we begin by relaxing the feasible to its convex hull, the Birkhoff polytope, also known as the set of doubly stochastic matrices, defined as $\mathcal{D} = \{P \in [0, 1]^{n \times n} | P \textbf{1}_n = \textbf{1}_n, P^\intercal \textbf{1}_n = \textbf{1}_n \}$; that is, square matrices whose rows and columns all sum to one.  We then formally define the relaxed Quadratic Assignment Problem (rQAP) as
\begin{equation}
\tag{rQAP}
\begin{aligned}
\min & {\;\trace{A^\intercal PB P^\intercal}}\\
\text{s.t. } & {\;P \: \epsilon \: \mathcal{D}} \\
\end{aligned}
\end{equation}
Even with the relaxation, rQAP is still non-convex due to it's quadratic objective function not having a necessarily positive definite Hessian. Indeed, when $A, B$ are hollow (matrix diagonal elements are all zero, as is the case with simple graphs with no self-loops), this Hessian is indefinite. However, the convex feasible region allows us to utilize continuous optimization techniques, and thus find a local optima.
\subsection{The Graph Matching Problem}
Consider two graphs, $G_1$ and $G_2$, with vertex sets $V_1, V_2 = \{1, 2, ..., n\}$ and corresponding adjacency matrices $A, B \in \mathbb{R}^{n \times n}$. The Graph Matching Problem (GMP) seeks to find a bijection $\phi: V_1 \rightarrow V_2$ such that the number of edge disagreements between $G_1$ and $G_2$ via $\phi$ is minimized. In matrix notation, the GMP is 
\begin{equation}
\begin{aligned}
\min & {\;|| A - PBP^\intercal||_F^2}\\
\text{s.t. } & {\;P \: \epsilon \: \mathcal{P}} \\
\end{aligned}
\end{equation}
where $||.||_F^2$ is the Frobenius norm. A special case of the GMP is known as a graph isomorphism, when there exists $P \in \mathcal{P}$ such that $A = PBP^\intercal$. Expanding the objective function:
\begin{equation}
\begin{aligned}
|| A - PBP^\intercal||_F^2 = \trace{( A - PBP^\intercal)^\intercal(A - PBP^\intercal)} = \\
\trace{A^\intercal A} + \trace{B^\intercal B} + 2*\trace{APB^\intercal P^\intercal}
\end{aligned}
\end{equation}
Dropping constant terms, (3) is equivalent to
\begin{equation}
\tag{GM}
\begin{aligned}
\min & {\;-\ \trace{A^\intercal PB P^\intercal}}\\
\text{s.t. } & {\;P \: \epsilon \: \mathcal{P}} \\
\end{aligned}
\end{equation}
We see that the objective function for the GMP is just the negation of the objective function for the QAP, and thus any algorithm that solves one can solve the other. 
\subsection{FAQ}

The state-of-art FAQ algorithm of Vogelstein et. al, 2015, uses the Frank-Wolfe algorithm to find a doubly stochastic solution to (2), then solves the linear assignment problem (LAP) to project this solution back onto the set of permutation matrices, thus approximately solving the QAP and GMP \cite{faq}. In 2018, Fishkind et al. proposed modifications to FAQ to extend it's application to allow for seeds (a known bijection subset) and graphs of different sizes \cite{sgm}. The FAQ algorithm is described in Algorithm \ref{FAQ}.

\begin{algorithm}
\caption{FAQ: Find a local optimum to QAP}
\label{FAQ}
\begin{algorithmic}
\Require Adjacency matrices $A, B \in \Real^{n \times n}$.
\State \textbf{Initialize: } $P^{(0)} \in \mathcal{D}$, barycenter ($J = \frac{1}{n}\textbf{1}_n \times \textbf{1}^\intercal_n$) unless otherwise specified
\Linefor{$i = 1, 2, 3, \ldots$ (while stopping criterion not met)}{
\begin{enumerate}
    \item Compute  $ \nabla f(P^{(i)}) = AP^{(i)}B^\intercal + A^\intercal P^{(i)}B$
    \item Compute $Q^{(i)} \in \argmin \tab \trace{Q^\intercal \nabla f(P^{(i)})}$ over $Q \in\mathcal{D}$ via Hungarian Algorithm.
    \item Compute step size $\alpha^{(i)} \in \argmin \tab f(\alpha P^{(i)} + (1 - \alpha) Q^{(i)})$, for $\alpha \in [0, 1]$
    \item Set $P^{(i+1)} = \alpha P^{(i)} + (1 - \alpha) Q^{(i)}$
    \end{enumerate}}\\
\Return $\hat{Q} \in \argmax \tab \trace{Q^\intercal \nabla f(P^{(final)})}$ over $Q \in\mathcal{P}$ via Hungarian algorithm.
\end{algorithmic}
\end{algorithm}

\subsection{Transportation and Sinkhorn Distnaces}
The optimal transport problem is a fundamental probability and optimization problem, in which the transportation of object $\mu$ to $v$ is minimized by some cost. More formally, consider the transportation polytope 
\begin{equation}
\begin{aligned}
U(r,c) = \{P \in \mathbb{R}^{n \times n} | P \textbf{1}_n = r, P^\intercal \textbf{1}_n = c\}
\end{aligned}
\end{equation}

where $U(r, c) \in \mathbb{R}^{n \times n}$, has all non-negative entries, with row and column sums equal to $r$ and $c$, respectively, and $\textbf{1}_n$ is the n-dimensional vector of ones. The optimal transport problem is thus defined as
\begin{equation}
\begin{aligned}
\min & \tab \langle P, M \rangle\\
\text{s.t. } & {\;P \: \in \: U(r, c)} \\
\end{aligned}
\end{equation}
where $M \in \mathbb{R}^{n \times n}$ is the cost matrix \cite{cot}. Cuturi (2013) proposed to modify optimal transport through the addition of a regularizing entropic penalty, $h(P)$ resulting in the \textit{Sinkhorn distance} formulation
\begin{equation}
\begin{aligned}
\min & \tab \langle P, M \rangle - {1 \over \lambda} h(P) \\
\text{s.t. } & {\;P \: \in \: U(r, c)} \\
\end{aligned}
\end{equation}
where $\lambda \in [0, \infty ]$, with the Sinkhorn and transportation distances equivalent for $\lambda$ suitably large. Cuturi showed that Sinkhorn distances could be solved using Sinkhorn-Knopp's fixed point iteration algorithm on $e^{- \lambda M}$ [\ref{LOT}], demonstrating that the method performed very well in practice, and was computationally fast, with an empirical time complexity of $O(n^2)$ \cite{lot,sinkhorn}.

\section{Methods}
\subsection{Motivation}
The primary computational bottleneck in FAQ is the linear assignment problem (LAP) solved in step 2, with the most commonly used linear assignment algorithms (Hungarian and Jonker-Volgenant) having a time complexity of $O(n^3)$ for dense matrices \cite{jv-lap}, \cite{hungarian}. Additionally, even if the adjacency matrix inputs are sparse, the gradient matrix calculated will in practice be dense, and thus sparse LAP solvers will not improve runtime.

Though it is known that the solution to maximizing $\trace{Q^\intercal \nabla f(P^{(i)})}$ over $Q \in\mathcal{D}$ can be chosen to be a permutation matrix (allowing for the use of LAP solvers), we argue that it should not always be selected to be a permutation matrix. This is due to the possibility that this maximization may have multiple solutions (this may also be referred to as a "tie"). Consider the following matrix \\
\[  M =
\begin{blockarray}{ cccc}
\begin{block}{(cccc)}
  40 & 50 & 60 & 65 \\
  30 & 38 & 46 & 48 \\
  25 & 33 & 41 & 43 \\
  39 & 45 & 51 & 59 \\
\end{block}
\end{blockarray}.
 \] 
 Maximizing $\trace{Q^\intercal M}$ over $Q \in\mathcal{P}$ (the feasible region via the linear assignment problem), yields two optimal solutions: \\
 \[  P_1 =
\begin{blockarray}{ cccc}
\begin{block}{(cccc)}
  1 & 0 & 0 & 0 \\
  0 & 1 & 0 & 0 \\
  0 & 0 & 0 & 1 \\
  0 & 0 & 1 & 0 \\
\end{block}
\end{blockarray}
\quad \text{ and} \quad
P_2 =
\begin{blockarray}{ cccc}
\begin{block}{(cccc)}
  1 & 0 & 0 & 0 \\
  0 & 0 & 0 & 1 \\
  0 & 1 & 0 & 0 \\
  0 & 0 & 1 & 0 \\
\end{block}
\end{blockarray}
\]
 
each with objective function value $\trace{Q^\intercal M} = 172$. LAP solvers settle these ties in a arbitrary manner which is sometimes based on the order of the input nodes (as in SciPy's \sct{linear\_sum\_assignment} \cite{scipy} as of this writing). Thus, though the the ordering of the rows/columns of the matrix when input to a LAP solver should have no meaning in our formulation, thus making the final FAQ solution biased to the original bijection between $A$ and $B$. In practice, we avoid such bias by randomly shuffling the nodes of $B$ multiple times at a single initialization. However, such a method can be extremely computationally burdensome, especially when matching large graphs ($n>10,000$).

Additionally, when such a tie is present, we have no reason to choose one solution over another. Indeed, it is easily shown that when multiple optimal solutions exist, all convex combinations of those solutions are also a solution (Lemma \ref{lem1}). We posit, and later show via simulated and real-world data experiments, that choosing a convex combination of these strict permutation matrix solutions can improve the final matching in the GMP, with the additional benefit that solving this relaxed subproblem can often be faster than solving the LAP. We thus seek a method of solving step 2 that is deterministic, while also balancing the possible optimal permutation matrix solutions into a single optimal doubly stochastic matrix solution.







\newtheorem{theorem}{Theorem}
\newtheorem{lemma}[theorem]{Lemma}
\begin{lemma}
\label{lem1}
Let $M \in \mathbb{R}^{n\times n}$ be a matrix with $n$ possible solutions to
$$C = \max_{Q \in \mathcal{P}} \tab \trace{Q^\intercal M}$$ 
where $C$ is the optimal objective function value, and $\mathcal{P}$ is the set of permutation matrices. Denote each solution $P_i$ for all  $i \in \{1,2,...n\}$. Let $P_{\lambda}$ be any convex linear combination of those solutions: 
$$P_{\lambda} = \sum_{i=1}^{n} \lambda_i P_i$$
for any $\lambda = \{ \lambda_1, \lambda_2, ..., \lambda_n \}$ such that $\sum_{i=1}^{n} \lambda_i = 1$ and $\lambda_i \in [0, 1]$.
Then, $P_{\lambda}$ is a solution to 
$$\argmax_{Q \in \mathcal{D}} \tab \trace{Q^\intercal M}$$ 
where $\mathcal{D}$ is the set of doubly stochastic matrices, and furthermore  $C = \trace{P_{\lambda}^\intercal M}$.
\end{lemma}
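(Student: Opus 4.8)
The plan is to exploit two facts: the map $Q \mapsto \trace{Q^\intercal M}$ is linear in $Q$, and the set of doubly stochastic matrices $\mathcal{D}$ is exactly the convex hull of the permutation matrices $\mathcal{P}$ (the Birkhoff--von Neumann theorem). Together these reduce the whole statement to elementary bookkeeping with convex weights, so I would organize the proof into a short value computation followed by an optimality argument.

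First I would verify the value claim. Since the trace is linear and transposition distributes over the sum defining $P_\lambda$, I would expand
$$\trace{P_\lambda^\intercal M} = \sum_{i=1}^{n} \lambda_i \, \trace{P_i^\intercal M} = \sum_{i=1}^{n} \lambda_i C = C,$$
where the middle equality uses that each $P_i$ is an optimal permutation so that $\trace{P_i^\intercal M} = C$, and the final equality uses $\sum_{i} \lambda_i = 1$. This immediately settles the claim $C = \trace{P_\lambda^\intercal M}$.

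Next I would show that $P_\lambda$ genuinely maximizes over $\mathcal{D}$, not merely that it attains $C$. Because $\mathcal{D}$ is convex and each $P_i \in \mathcal{P} \subseteq \mathcal{D}$, the convex combination $P_\lambda$ lies in $\mathcal{D}$ and is therefore feasible. It then remains to certify that no doubly stochastic matrix beats $C$. For an arbitrary $Q \in \mathcal{D}$, I would invoke Birkhoff--von Neumann to write $Q = \sum_{j} \mu_j R_j$ as a convex combination of permutation matrices $R_j \in \mathcal{P}$ with $\mu_j \in [0,1]$ and $\sum_j \mu_j = 1$, and then the same linearity yields
$$\trace{Q^\intercal M} = \sum_{j} \mu_j \, \trace{R_j^\intercal M} \le \sum_{j} \mu_j C = C,$$
since $C$ is by definition the maximum of $\trace{\,\cdot\,^\intercal M}$ over $\mathcal{P}$. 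Hence $\max_{Q \in \mathcal{D}} \trace{Q^\intercal M} = C$, and since $P_\lambda$ attains this value it belongs to the argmax set.

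The only non-routine ingredient is the Birkhoff--von Neumann theorem, which supplies both the feasibility $P_\lambda \in \mathcal{D}$ and, more importantly, the upper bound certifying that no doubly stochastic matrix can exceed $C$; everything else is linearity of the trace together with the normalization $\sum_i \lambda_i = 1$. I would also remark that the hypothesis of exactly $n$ optimal solutions plays no role: the argument is valid for any finite collection of optimal permutations, so the main task is to state the reduction cleanly rather than to surmount any real difficulty.
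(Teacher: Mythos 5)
Your proposal is correct and rests on the same two ingredients as the paper's proof---linearity of the trace and Birkhoff's theorem---but it is strictly more complete. The paper's proof only establishes feasibility ($P_\lambda \in \mathcal{D}$, via the easy direction of Birkhoff: convex combinations of permutation matrices are doubly stochastic) and the value computation $\trace{P_\lambda^\intercal M} = \sum_i \lambda_i C = C$. It never verifies that $C$ is in fact the \emph{maximum} of $\trace{Q^\intercal M}$ over $\mathcal{D}$, which is what the argmax claim requires; attaining the value $C$ is not the same as being optimal over the larger feasible set unless one knows the relaxation does not increase the optimum. Your third step supplies exactly this missing certificate, by invoking the converse direction of Birkhoff--von Neumann (every $Q \in \mathcal{D}$ is a convex combination of permutation matrices) to bound $\trace{Q^\intercal M} \le C$ for all doubly stochastic $Q$. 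Your closing remark is also correct: nothing in the argument uses that there are exactly $n$ optimal permutations, so the lemma holds for any finite set of optimal solutions. The only other difference is cosmetic---the paper's displayed chain of equalities drops the transpose, writing $\trace{P_i M}$ where $\trace{P_i^\intercal M}$ is meant---which your version handles correctly.
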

\begin{proof}
By assumption, $\trace{P_i^\intercal M} = C$. From Birkhoff's Theorem, with the $\lambda_i$'s defined as above, it's known that $\sum_{i=1}^{n} \lambda_i P_i \in \mathcal{D}$ - that is, any convex combination of permutation matrices is a doubly stochastic matrix. Plugging this convex combination into our objective function and using the linearity of the trace, 
$$\trace{\sum_{i=1}^{n} \lambda_i P_i M}= 
\sum_{i=1}^{n} \trace{\lambda_i P_i M} = 
\sum_{i=1}^{n} \lambda_i \trace{P_i M} =
\sum_{i=1}^{n} \lambda_i C  = C$$
\end{proof}


\subsection{Doubly Stochastic Optimal Transport}
Setting $r, c = \textbf{1}_n$, the transportation polytope is equivalent to the Birkhoff polytope, also known as the set of doubly stochastic matrices ($U(\textbf{1}_n, \textbf{1}_n) = \mathcal{D}$). We consider this to be a family of optimal transport, and refer to it as the \textit{doubly stochastic optimal transport problem}. Additionally, since $\langle P, M \rangle = \trace{P^\intercal M}$, the doubly stochastic optimal transport problem can be written as:
\begin{equation}
\begin{aligned}
\min & \tab \trace{P^\intercal M}\\
\text{s.t. } & {\;P \in \mathcal{D}} \\
\end{aligned}
\end{equation}
which is precisely equivalent to the relaxed linear assignment problem (rLAP). Thus, we may use Sinkhorn distances to solve the rLAP. We define the rLAP algorithm inspired by Cuturi as \textit{Lightspeed Optimal Transport} (LOT) in Algorithm \ref{LOT}. To solve the maximization rLAP problem, simply negate M. In practice, we choose $\lambda \geq 100$.
\begin{algorithm}[H]
\caption{LOT}
\label{LOT}
Find doubly stochastic solution to rLAP\\
 \textbf{Inputs:} Cost matrix $M \in \mathbb{R} ^{n\times n}$, $\lambda \in \mathbb{R}$
    \begin{enumerate}
    \item Compute $K \leftarrow e^{- \lambda M}$
    \item Compute $Q \leftarrow Sinkhorn(K)$
    \end{enumerate}
 \Return $Q$
 \label{LOT}
\end{algorithm} 

To demonstrate LOT's effectiveness, we independently realized 100 cost matrices $M \in \mathbb{R}^{n \times n}$ for $n = 250, 500, ...,\\ 3000$, where each entry $M_{i,j} \tab \forall i, j \in n$ is sampled randomly from the $Uniform(100,150)$ distribution (Figure \ref{fig:lap_vs_ot}). LOT results in a substantial speed increase over  traditional LAP algorithms, with little loss in performance, with percent difference in objective function value less than 0.5\%.

\begin{figure}[hbt!] 
    \centering
    \includegraphics[width=\linewidth] {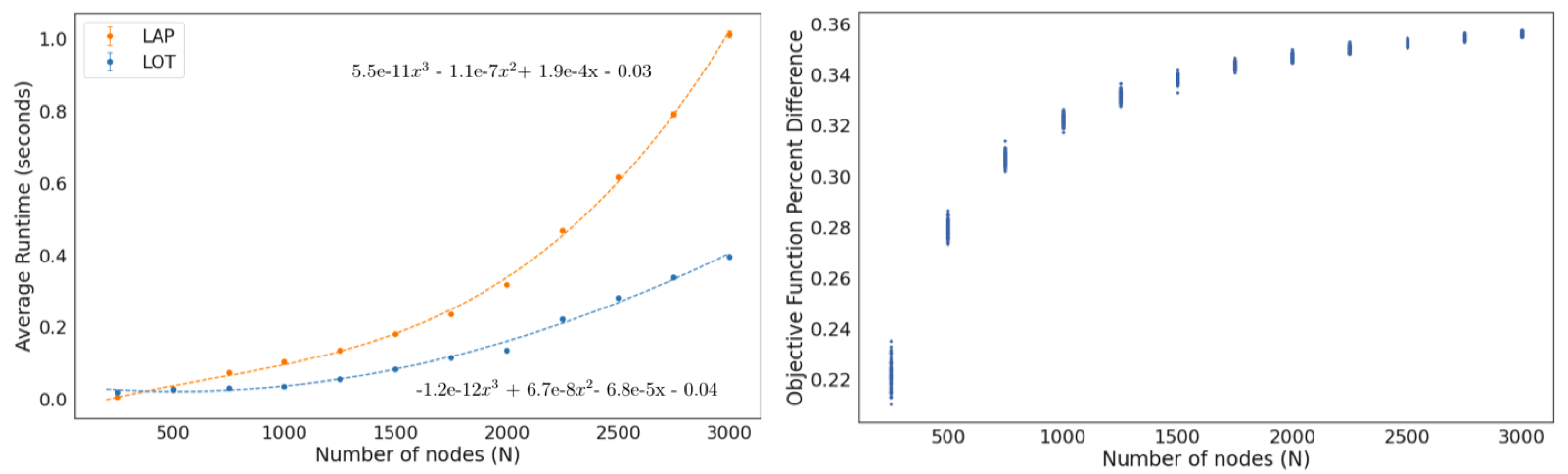}
    \caption{Running time (left) and performance (right) of LAP and LOT as a function of number of nodes, $n$. Each element of cost matrix is independently sampled from a Uniform(100, 150) distribution, with 100 simulations per $n$. Performance defined as relative accuracy, $OFV_{LAP} - OFV_{LOT} \over OFV_{LAP}$, with each dot representing a single simulation.}
    \label{fig:lap_vs_ot}
\end{figure}

\subsection{GOAT}
We introduce the \textit{Graph matching via OptimAl Transport} \textbf{GOAT} algorithm, a modification of the state-of-the-art FAQ algorithm \cite{faq}. 

\begin{algorithm}
\caption{GOAT: Find a local optimum for Graph Matching Problem}
\label{GOAT}
\begin{algorithmic}
\Require Graphs $G_1, \tab G_2$ with vertex sets $V_1, V_2 = \{1, 2, ..., n\}$, and associated adjacency matrices $A, B \in \mathbb{R} ^{n\times n}$
\State \textbf{Initialize: } $P^{(0)} \in \mathcal{D}$, barycenter ($J = \frac{1}{n}\textbf{1}_n \times \textbf{1}^\intercal_n$) unless otherwise specified
\Linefor{$i = 1, 2, 3, \ldots$ (while stopping criterion not met)}{
\begin{enumerate}
    \item Compute $\nabla f(P^{(i)}) = AP^{(i)}B^\intercal + A^\intercal P^{(i)}B$
    \item Compute $Q^{(i)} \in \argmax \tab \trace{Q^\intercal \nabla f(P^{(i)})}$ over $Q \in\mathcal{D}$ via Lightspeed Optimal Transport [\ref{LOT}].
    \item Compute step size $\alpha^{(i)} \in \argmax \tab f(\alpha P^{(i)} + (1 - \alpha) Q^{(i)})$, for $\alpha \in [0, 1]$ as in \cite{sgm}
    \item Set $P^{(i+1)} = \alpha P^{(i)} + (1 - \alpha) Q^{(i)}$
    \end{enumerate}}\\
\Return $\hat{Q} \in \argmax \tab \trace{Q^\intercal \nabla f(P^{(final)})}$ over $Q \in\mathcal{P}$ via Hungarian algorithm.
\end{algorithmic}
\end{algorithm}

In replacing the bottleneck LAP step with the fast and accurate LOT algorithm, GOAT is faster on larger graphs and adds robustness when solving graph matching problems when two graphs are not isomorphic. Rather than tie breaking like LAP solvers, LOT distributes weight across the similar nodes in the doubly stochastic step direction matrix, $Q$.

When initializing GOAT, though any doubly stochastic matrix is feasible, we typically choose the doubly stochastic barycenter, $J = \frac{1}{n} \textbf{1}_n \times \textbf{1}^\intercal_n$ as the initialization. If the graphs are sufficiently small, we can also use several random initializations to maximize performance. Specifically, we independently run GOAT for many $P^{(0)} = \frac{1}{2} (J + K)$, where $K$ is a random doubly stochastic matrix generated via the Sinkhorn-Knopp algorithm, and choose the permutation with the best associated objective function value. 

\section{Results}
We explore the effectiveness of GOAT on simulated and real data examples, measured through matching ratio (the fraction of nodes that are correctly aligned), objective function value, and runtime. Since GOAT is a modification of FAQ, we demonstrate the advantages of GOAT to FAQ in each of our experiments. FAQ has previously been shown to dominate the PATH, QCV \cite{path}, Umeyama \cite{umey}, and RANK \cite{rank} graph matching algorithms. Therefore, by comparing GOAT to FAQ, we can effectively compare GOAT to the other four algorithms as well. For individual experiments, parameters such as maximum number of iterations, stopping tolerance, etc., are consistent across FAQ and GOAT. See code for specific parameter values (Section \ref{code}). 

\subsection{Correlated Network Simulation Setup}
In our simulations, we sample graph pairs $G_1, G_2$ from a $\rho$-correlated Stochastic Block Model (SBM) \cite{sbm, divide_and_conquer}. The $\rho$-SBM model is given:
 \begin{enumerate}
   \item Number of nodes per block, $n \in \mathbb{R}^k$, where $k \in \mathbb{Z}_{>0}$ is the number of blocks.
   \item Edge probability matrix $B \in [0, 1]^{k \times k}$, where $B_{i,j}$ represents the probability of an edge between nodes in community $[i, j]$.
  \end{enumerate}
  
For this model, $\rho = 0$ implies that the graph pairs are independent, and $\rho = 1$ implies that the graph are isomorphic. The $\rho-$SBM model maintains a one-to-one node correspondence across graph pairs, while also incorporating stochasticity. Additionally, the $\rho$-correlated Erdos-Renyi (ER) model can be considered a special case of $\rho$-SBM, in which $k=1$. Here, we utilize the $\rho-$SBM sampler as implemented in graspologic \cite{graspy}.

\subsection{Space and Time Complexity}
As noted earlier, even if the adjacency matrix inputs for GOAT are sparse, the gradient matrix computed will be dense. Thus, GOAT and FAQ share a space complexity of $O(n^2)$, the space required to store $\nabla f(P)$.  We demonstrate the computational advantages of GOAT over FAQ through it's run-time, especially with larger $n$. Since a LAP solver is still required in the final step of the algorithm to project $P^{final}$ onto the set of permutation matrices, as well as the matrix multiplication required, GOAT has a time complexity of $O(n^3)$, equivalent to that of FAQ. However, GOAT has polynomial coefficient terms each an order of magnitude smaller than that of FAQ (Figure \ref{fig:er_time}). We do note however that FAQ is faster for smaller graphs with $n \lessapprox 300$, due to it's constant term of -0.1 compared to GOAT's of 0.1.
 
 Additionally, GOAT far outperforms FAQ in terms of its performance, consistently recovering the exact match between graph pairs for each $n$. Interestingly, FAQ has a faster runtime for  $n<200$ in this simulation, because its constant time term is smaller. Nonetheless, GOAT's performance scales very well, consistently finding the exact match even as $n$ increases.
 \begin{figure}
    \centering
    \includegraphics[width=\linewidth] {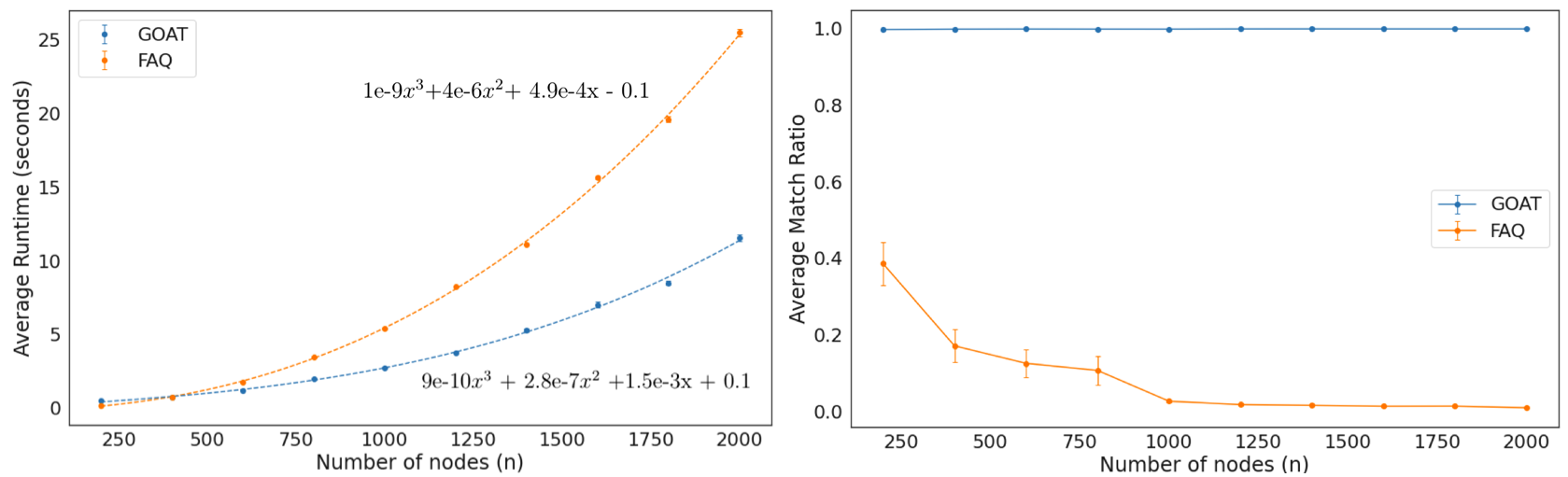}
    \caption{Running time (left) and matching accuracy (right) ($\pm$ s.e) of FAQ and GOAT as a function of number of nodes, $n$. Data sampled from Erdos-Renyi model, with $\rho = {log(n) \over n}$, with 50 simulations per $n$.}
    \label{fig:er_time}
\end{figure}

\subsection{Correlated Network Simulation Accuracy}
Our first simulation result is focused on assessing GOAT's performance in recovering the underlying node correspondence between two graphs. To model a setting in which this correspondence exists, we sample graph pairs $G_1, G_2$ from a $\rho$-correlated SBM.

In our first experiment, we independently sample 100 $\rho$-SBM graph pairs for each value of $\rho = \{0.5, 0.6, \ldots, 1.0\}$ on 150 nodes where $k=3$, and
\\
\[  B =
\begin{blockarray}{ ccc}
\begin{block}{(ccc)}
  0.2 & 0.01 & 0.01 \\
  0.01 & 0.1 & 0.01 \\
  0.01 & 0.01 & 0.2 \\
\end{block}
\end{blockarray},
 \] \\
with each block containing an equal number of nodes. We additionally sample 25 $\rho$-SBM graph pairs for each value of $\rho = \{0.8, 0.85, \ldots, 1.0\}$ on 1,500 nodes, with the same $k$ and $B$ above. For each pair of graphs, we run both FAQ and GOAT, and measure the match ratio, defined as fraction of correctly aligned nodes to total nodes. For each $\rho$ value, we plot the average match ratio over the graph pairs, along with twice the standard error (Figure \ref{fig:rho_sbm}).
\begin{figure}[hbt!] 
    \centering
    \includegraphics[width=\linewidth] {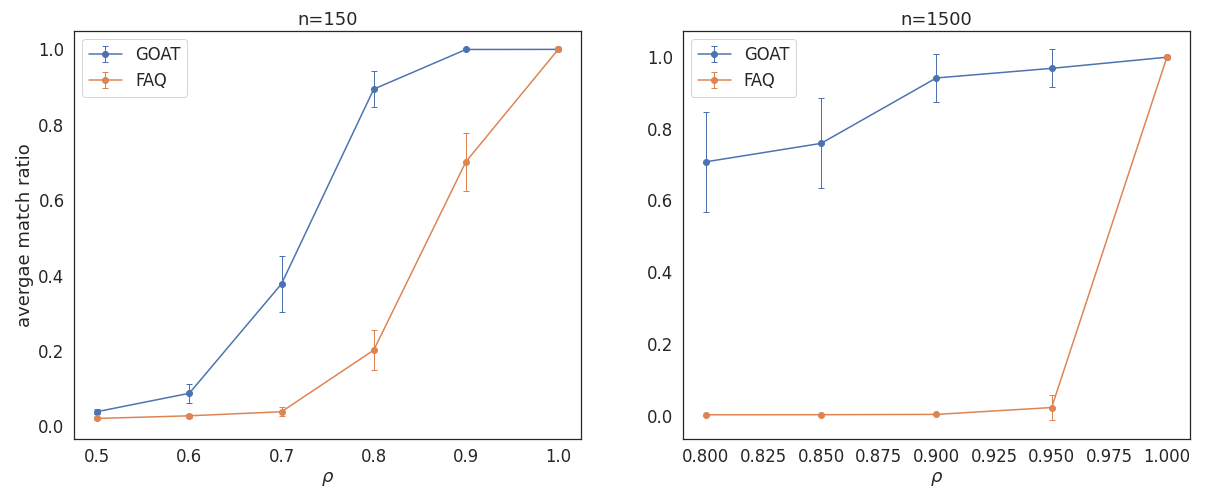}
    \caption{Average match ratio $\pm$ 2 s.e. as a function of correlation values $\rho$ in $\rho$-SBM simulations on $n=150, 1500$ nodes.}
    \label{fig:rho_sbm}
\end{figure}

Across both experiments, we note that GOAT consistently reports a matching accuracy that is greater than or equal to that of FAQ. As expected, GOAT provides improvements to the graph matching results as the strength of the best alignment decreases. It is important to note that in real-data settings, networks are often large and highly correlated, but rarely isomorphic. Indeed, when n = 1,500, the performance gap between FAQ and GOAT is even more noticeable, and when $\rho = 0.95$, GOAT consistently recovers the exact alignment, while FAQ reports an average match ratio $< 0.1$.

In our next experiment, we investigate GOAT's effectiveness in solving the seeded graph matching problem. Applying Fishkind, et. al's modification to both FAQ and GOAT, we run the following experiment, inspired by Figure 2 in Fishkind, et. al, 2018 \cite{sgm}. We independently sample 100 $\rho$-SBM graph pairs for each value of $\rho = \{0.3, 0.6, 0.9\}$ on 300 nodes where $k=3$, and
\\
\[  B =
\begin{blockarray}{ ccc}
\begin{block}{(ccc)}
  0.7 & 0.3 & 0.4 \\
  0.3 & 0.7 & 0.3 \\
  0.4 & 0.3 & 0.7 \\
\end{block}
\end{blockarray}
 \] \\
with each block containing an equal number of nodes. For each rho value, we plot the average match ratio (over the 100 independent realizations) as a function of the number of seeds, $m$.

\begin{figure}[hbt!] 
    \centering
    \includegraphics[width=\linewidth] {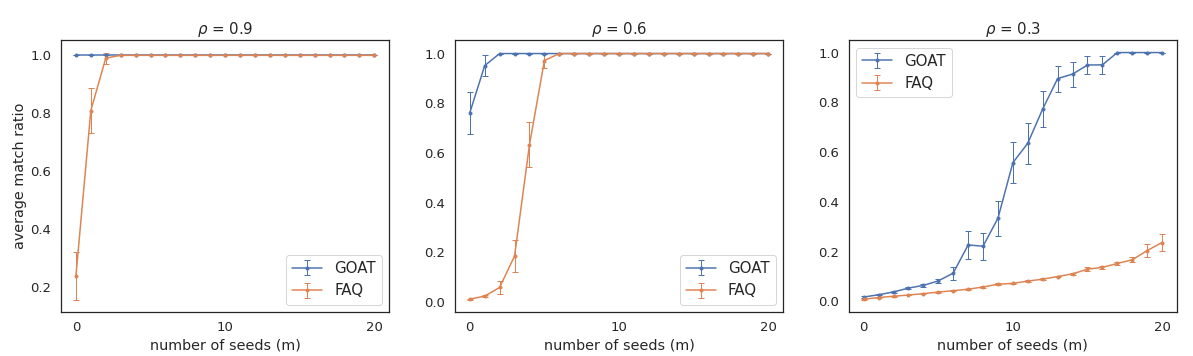}
    \caption{Average match ratio $\pm$ 2 s.e. as a function of number of seeds $m$ for correlation values $\rho = 0.6, 0.9$ in $\rho$-SBM simulations on $n=300$ nodes.}
    \label{fig:sgm}
\end{figure}

In Fig \ref{fig:sgm} we observe that, with all $\rho$ experiments, GOAT requires fewer seeded vertices than FAQ to converge to the exact matching. Additionally, when $\rho = 0.3$, GOAT converges to finding the exact matching, while FAQ does not, with FAQ's matching accuracy less than 0.2 compared to GOAT's accuracy of 1.0.

\subsection{QAPLIB Benchmark}
In order to benchmark GOAT's performance against FAQ's, we evaluate the algorithm's performance on the QAPLIB, a standard library of 137 quadratic assignment problems \cite{qaplib}. Performance is measured by minimizing the objective function $f(P) =\trace{A P B^\intercal P^\intercal}$. In Figure \ref{fig:qaplib}, we plot the log (base 10) relative accuracy $f_{GOAT} \over f_{FAQ}$ for each of the 137 QAPLIB instances, with two initialization schemes:
\begin{enumerate}
    \item Random initialization: the minimum objective function value over 100 initializations at $P = \frac{1}{2} (J + K)$, where $J$ is the doubly stochastic barycenter, and $K$ is a random doubly stochastic matrix.
    \item Barycenter initialization: a single initialization.
\end{enumerate}

Note that a random shuffle on $B$ is performed at each initialization prior to running the algorithms, and random shuffles and initializations are consistent across GOAT and FAQ (the same are used for both methods). 

\begin{figure}[H] 
    \centering
    \includegraphics[width=\hsize] {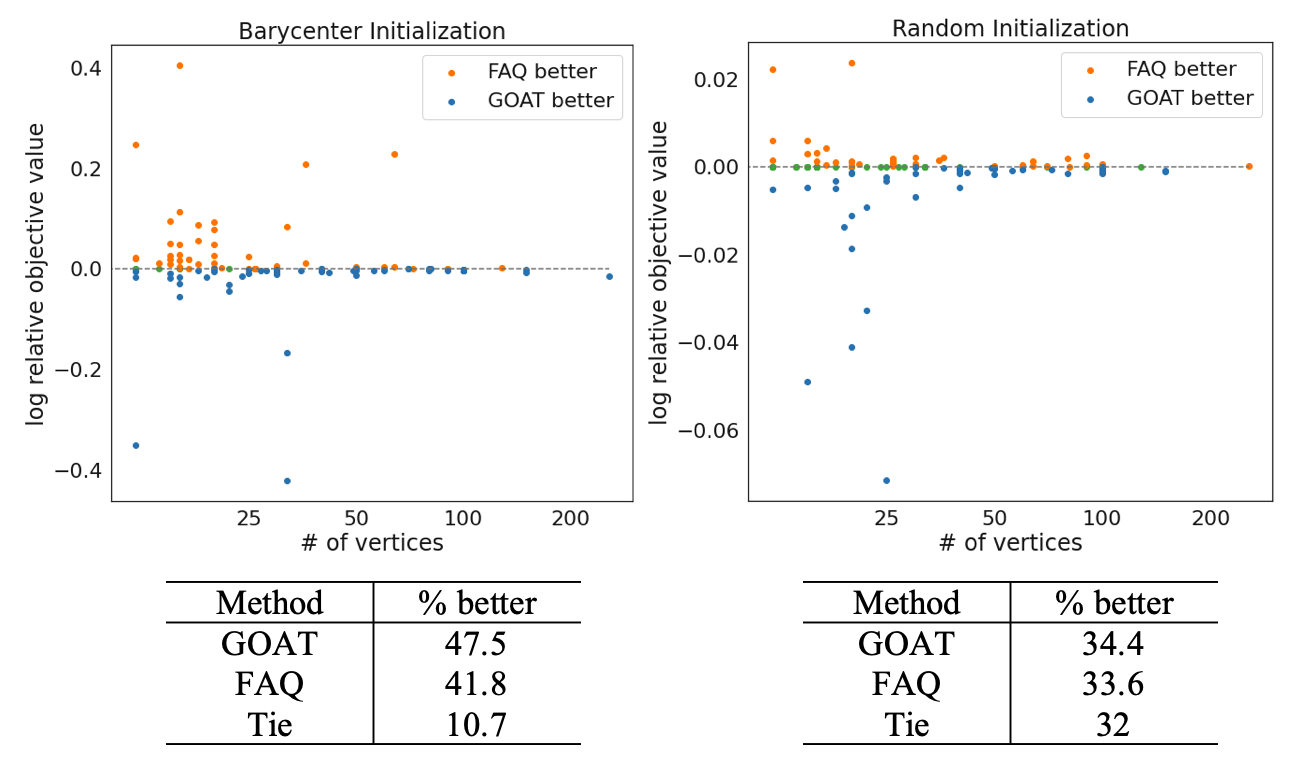}
    \caption{Relative accuracy between FAQ and GOAT, defined as y=log($f_{GOAT} \over {f_{FAQ}}$). Performance is compared using the minimum objective function value over 100 random initializations, and the objective function value of a barycenter initialization with one random shuffle. Initializations and shuffles are the same across FAQ and GOAT.}
    \label{fig:qaplib}
\end{figure}

We note that GOAT performs better on a marginally larger portion of the QAPLIB problems. Using the Mann-Whitney U test, we observe p-values of 0.50 and 0.49 for the random and barycenter initializations, respectively, failing to reject the null hypothesis that GOAT performs differently than FAQ, and vice versa. This result demonstrates that replacing the exact LAP solver with an approximate algorithm causes no significant loss in performance.

\subsection{Organization Email Network Time Series Data}

Next, we analyze an anonymized network time series data set generated from email communication within the Microsoft (MSFT) organization between approximately 80,000 accounts, over a time period of about 10 months, from January 2020 through October 2020. Each month $t$ has its own weighted graph $G_t = (V, E)$ such that each edge $(u, v) \in V \times V$ has weight totaling the number emails between accounts $u$ and $v$ \cite{msr}. Here we demonstrate the real world relevance of the simulation results previously presented, most specifically in Figure \ref{fig:rho_sbm} where GOAT had a sizeable advantage over FAQ when the two graphs were correlated but not isomorphic. 

First, we select a relatively dense induced subgraph on 10,000 nodes from each month $t$ such that there is a 1-1 node correspondence between the subgraphs at each time point. As the time between graph pairs ($\Delta t$) increases, it becomes more difficult to align the two networks. For each $t \in [1, ..., 6]$ (such that $t$ is the month number) and $\Delta t \in [0, ..., 4]$, we match the induced subgraphs taken from $G_t$ and $G_{t + \Delta t}$. To diminish the effect of anomalous edge weights, we apply pass-to-ranks to $G_t$ and $G_{t + \Delta t}$ prior to graph matching \cite{stat_conn}. We observe in Figure \ref{fig:msr_corr} that FAQ immediately fails in solving the graph matching problem when $\Delta t > 0$, with matching accuracy approximately zero, while GOAT still manages to recover a large portion of the true node correspondences. 

\begin{figure}[hbt!] 
    \centering
    \includegraphics[width=\linewidth] {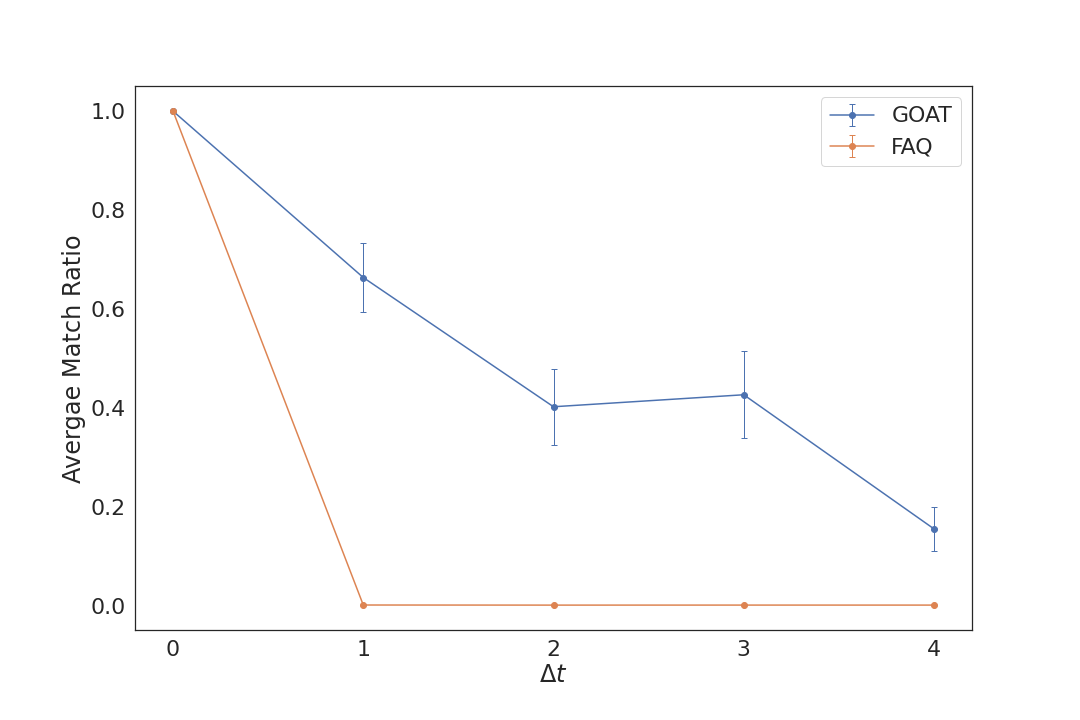}
    \caption{Average match ratio $\pm$ 2 s.e. as a function of $\Delta t$ in MSFT time series induced subgraphs.}
    \label{fig:msr_corr}
\end{figure}



\section{Discussion}
This work presents GOAT, a modification to the state-of-art FAQ algorithm, for solving the graph matching and quadratic assignment problems. The modifications make the algorithm faster on larger graphs, and improve accuracy on simulation benchmarks used in the literature. Most significantly, we demonstrate large improvements for solving inexact graph matching problems, which are most prevalent among real world examples.

In our simulated experiments, we demonstrated that GOAT was faster than FAQ on larger graphs, and consistently far outperformed FAQ when finding accurate bijections across inexact graph pairs. We additionally showed that by adding seeded nodes to the algorithm using the modification specified in \cite{sgm}, the improvements from GOAT still remain. Further, using the QAPLIB benchmarks, which has smaller graphs than those used in simulations, we demonstrated that GOAT still performs as well as FAQ, as was the case with simulated isomorphic graph pairs. Finally, the simulation results were further emphasized by the real world experiment performed on the MSFT time series data, in which the performance benefits for correlated graph pairs were confirmed.

In the future, we hope to extend GOAT's advances to other, related graph matching applications. For example, researchers are often concerned with finding a "soft-matching", or a list of potential matches for each vertex of interest. Previous approaches \cite{sgm, vnsgm} to this so-called vertex nomination problem have performed many restarts of the FAQ algorithm with various initial parameters, and then computed the probability that one vertex was matched to another. Since GOAT deals with soft matchings (i.e. doubly stochastic matrices rather than strictly permutation matrices) at every stage of the algorithm, we suspect that it could be a more natural way to provide vertex nominations. Specifically, we posit that GOAT will allow us to avoid using multiple initializations without loss of accuracy. We leave this investigation to future work. 

\section{Code}
\label{code}
The analysis in this paper were performed using the \texttt{graspologic} \cite{graspy},  \texttt{NumPy} \cite{harris2020array}, \texttt{SciPy} \cite{scipy}, \texttt{Pandas} \cite{mckinney2010data}, and \texttt{POT} \cite{pot} Python packages. Plotting was performed using matplotlib \cite{hunter2007matplotlib} and Seaborn \cite{Waskom2021}. All experiments in this paper can be reproduced using the code at \url{https://github.com/neurodata/goat}, and can be easily viewed at the documentation site \url{http://docs.neurodata.io/goat/outline.html}.


\section*{Acknowledgements}

This research was supported by funding from Microsoft Research. B.D.P. was supported by the NSF GRFP (DGE1746891). This work was supported by the NSF CAREER award (1942963) to J.T.V. and NIH BRAIN Initiative (RFA-MH-19-147) to J.T.V. and C.E.P.

We thank the NeuroData lab for helpful feedback, and Jonathan Larson for help with the network timeseries data.

\clearpage
\bibliographystyle{plainnat}
\bibliography{main.bib}

\end{document}